\documentclass[letterpaper]{article} % DO NOT CHANGE THIS
\usepackage[preprint]{aaai2027}  % DO NOT CHANGE THIS
\usepackage[hyphens]{url}  % DO NOT CHANGE THIS
\usepackage{graphicx} % DO NOT CHANGE THIS
\usepackage{natbib}  % DO NOT CHANGE THIS AND DO NOT ADD ANY OPTIONS TO IT
\usepackage{caption} % DO NOT CHANGE THIS AND DO NOT ADD ANY OPTIONS TO IT
\usepackage{algorithm}
\usepackage{algorithmic}
\usepackage{amsmath}
\usepackage{amssymb}
\usepackage{amsthm}
\newtheorem{theorem}{Theorem}
\newtheorem{proposition}{Proposition}
\usepackage{booktabs}

\usepackage{newfloat}
\usepackage{listings}
\DeclareCaptionStyle{ruled}{labelfont=normalfont,labelsep=colon,strut=off} % DO NOT CHANGE THIS
\floatstyle{ruled}
\newfloat{listing}{tb}{lst}{}
\floatname{listing}{Listing}

\begin{document}
\title{Meta-Learned Reward Shaping for Reinforcement Learning
from Human Feedback}

\author{
    Yunpeng Chu
}

\affiliations{
    Stony Brook University\\
    yunpeng.chu@stonybrook.edu
}

\maketitle

\begin{abstract}
Reinforcement Learning from Human Feedback (RLHF) is the standard approach for aligning large language models with human preferences, but its quality is limited by static, task-agnostic reward models. This mismatch leads to sparse learning signals and suboptimal alignment. We introduce MeRLa (Meta-Learned Reward Shaping), a principled framework that meta-learns a task-aware shaping function $\Phi(x, y; \phi)$ across auxiliary tasks before RLHF training. The learned shaping produces a composite reward that preserves policy optimality while providing task-specific learning signals. Our meta-objective combines task discrimination, entropy regularization, and potential-based conservation for stable convergence. We provide theoretical guarantees for policy invariance, analyze representation drift sensitivity, and formally address incentive misalignment from entropy maximization. Experiments on LLaMA-3-8B across four benchmarks show consistent improvements over PPO, DPO, GRPO, and DAPO---90.8\% length-controlled win rate on AlpacaEval 2.0 and 9.14 on MT-Bench, with 41\% less training instability. MeRLa retains its benefits when combined with process-based and rubric-based enhanced rewards.
\end{abstract}

% Uncomment the following to link to your code, datasets, an extended version or similar.
% You must keep this block between (not within) the abstract and the main body of the paper.
% Make sure that you do not de-anonymize yourself with these links.
% \begin{links}
%     \link{Code}{https://aaai.org/example/code}
%     \link{Datasets}{https://aaai.org/example/datasets}
%     \link{Extended version}{https://aaai.org/example/extended-version}
% \end{links}

\section{Introduction}

Reinforcement Learning from Human Feedback (RLHF) is a cornerstone of post-training alignment \citep{ouyang2022training, bai2022helpful}. The standard pipeline first trains a reward model $R_\psi$ on pairwise preference data \citep{wang2024secrets, gao2022scaling}, then optimizes the policy via PPO \citep{schulman2017ppo}, GRPO \citep{shao2024deepseekmath}, or DPO \citep{rafailov2024dpo}. Despite its success, this paradigm faces well-documented limitations.

Trained on static, task-agnostic preference data, the reward model provides a fixed signal across diverse prompts spanning reasoning, creativity, safety, and factual accuracy \citep{wang2024secrets, gao2022scaling, casper2023open}. This leads to: (1) \textit{reward sparsity}---subtle quality differences receive near-identical scores; (2) \textit{overoptimization}---the policy exploits reward model blind spots; and (3) \textit{alignment tax}---excessive optimization on a narrow signal degrades general capabilities \citep{casper2023open}.

\subsection{Reward Shaping: Promise and Peril}

Reward shaping augments the base reward with auxiliary terms to guide learning, with deep theoretical foundations in RL \citep{ng1999policy}. Potential-based shaping preserves the optimal policy and can improve learning efficiency. However, designing shaping functions for LLMs is challenging: manual engineering requires per-task expertise \citep{zou2019meta}, while learned shaping functions, including self-adaptive approaches \citep{ma2024adaptive}, must be carefully constrained because non-potential-based shaping may alter the induced optimal policy \citep{ng1999policy}. Self-rewarding models \citep{yuan2024selfreward} and AI-generated rewards \citep{lee2024rlaif, xie2023text2reward} partially address this but rely on potentially flawed self-judgments or expensive inference.

\subsection{Our Contribution}

We propose MeRLa, a framework that meta-learns a reward shaping function $\Phi(x, y; \phi)$ before RLHF training. The key insight is that experience across auxiliary tasks provides rich signal for learning what constitutes a good reward, without additional human labels. Our contributions:

\begin{itemize}
\item A meta-learning formulation for RLHF reward shaping that preserves policy optimality via potential-based constraints (\S 4.1).
\item A composite meta-objective balancing task discrimination, entropy regularization, and shaping conservation (\S 4.2).
\item Theoretical analysis of representation drift sensitivity (\S 5.2) and incentive misalignment from entropy maximization (\S 5.3).
\item Extensive experiments on LLaMA-3-8B showing consistent gains over PPO, DPO, GRPO, and DAPO across four benchmarks, plus ablations with enhanced base rewards (\S 6.2, \S 6.5).
\item Analysis showing 37\% lower reward variance and 41\% less training instability while preserving generation diversity (\S 7.3).
\end{itemize}

\section{Related Work}

\subsection{RLHF and Preference Optimization}

The standard RLHF pipeline trains a reward model on pairwise preferences, then optimizes the policy via PPO \citep{schulman2017ppo, ouyang2022training}. Recent alternatives include DPO \citep{rafailov2024dpo} (bypasses the reward model), GRPO \citep{shao2024deepseekmath} (group-relative baselines), DAPO \citep{Yu2025dapo} (decoupled clipping), RLOO \citep{ahmadian2024rloo} (REINFORCE with online baseline), and SimPO \citep{meng2024simpo} (reference-free reward). All operate on a fixed reward signal; MeRLa complements them by learning to shape this signal.

\subsection{Reward Shaping in RL}

Potential-based reward shaping guarantees policy invariance \citep{ng1999policy}. Prior work explores meta-learned shaping \citep{zou2019meta}, self-adaptive shaping \citep{ma2024adaptive}, Text2Reward \citep{xie2023text2reward}, and self-rewarding models \citep{yuan2024selfreward}. MeRLa differs by meta-learning a continuous shaping function on the LLM input-output space with formal policy preservation and representation drift analysis.

\subsection{Process and Enhanced Reward Models}

Process reward models provide step-level feedback for reasoning \citep{lightman2023verify, uesato2022math}; Shepherd \citep{wang2023shepherd} trains a critic for error detection. RewardBench \citep{lambert2024rewardbench} and WARM \citep{rame2024warm} improve evaluation and robustness. MeRLa is complementary: it enhances any base reward signal, including process-based and rubric-based rewards (\S 6.5).

\subsection{Meta-Learning for Language Models}

Meta-learning has been applied to in-context learning \citep{min2022metaicl} and few-shot adaptation, but its use in RLHF reward design remains underexplored. We formulate reward shaping as a bilevel optimization problem: inner loop performs RLHF on individual tasks, outer loop optimizes the shaping function across tasks.

\subsection{Representation Stability in Fine-Tuning}

Fine-tuning can distort pretrained representations and degrade OOD performance \citep{kumar2022finetune}, which is particularly relevant for RLHF where the policy evolves continuously. We address this with a frozen encoder for the shaping network and provide theoretical drift bounds (\S 5.2).

\section{Preliminaries}

\paragraph{Language Model as Policy.} We view an autoregressive language model with parameters $\theta$ as a stochastic policy $\pi_\theta(y \mid x) = \Pi_{t=1}^{|y|} \pi_\theta(y_t \mid x, y_{<t})$ where $x$ is a prompt and $y = (y_1, \dots, y_{|y|})$ is the generated response.

\paragraph{RLHF Objective.} Given a base reward model $R_{\text{base}, \psi}(x, y)$ trained on preference data $D_{\text{pref}}$, the standard RLHF objective optimizes:
\begin{equation}
\max_{\theta} \mathbb{E}_{x \sim D, y \sim \pi_\theta(\cdot \mid x)} \left[ R_{\text{base}}(x, y) - \beta \, \text{KL}(\pi_\theta \| \pi_{\text{ref}}) \right]
\end{equation}
where $\pi_{\text{ref}}$ is the supervised fine-tuning reference policy and $\beta$ controls the KL penalty.

\paragraph{Potential-Based Reward Shaping.} A shaping function $\Phi$ is potential-based if it satisfies the form $\Phi(s, s') = \gamma\phi(s') - \phi(s)$ for some potential function $\phi: S \to \mathbb{R}$ \citep{ng1999policy}. This guarantees that the optimal policy remains unchanged under the transformed reward $R' = R + \Phi$.

\section{Method}

MeRLa operates in two phases: a meta-learning phase that learns a task-aware reward shaping function, and a deployment phase that uses the learned shaping to augment standard RLHF training. Figure~\ref{fig:framework} provides an overview.

\begin{figure}[t]
\centering
\includegraphics[width=\linewidth]{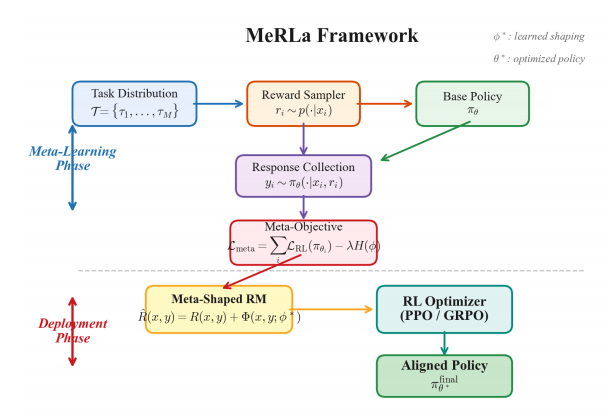}
\caption{MeRLa Framework. Meta-Learning Phase: A task distribution provides diverse prompts for reward shaping optimization. Deployment Phase: The learned shaping function augments the base reward model, producing a composite signal that guides the RL optimizer.}
\label{fig:framework}
\end{figure}

\subsection{Composite Reward Design}

We define the shaped reward as:
\begin{equation}
\hat{R}(x, y; \phi, \psi) = R_{\text{base}, \psi}(x, y) + \alpha \cdot \Phi(x, y; \phi)
\end{equation}
where $\Phi(x, y; \phi)$ is the learned shaping function, $\alpha$ parameterizes the shaping strength, and $\alpha \in [0, 1]$ controls the shaping strength. 

\paragraph{Shaping Network Architecture.} The shaping function $\Phi$ is implemented as a lightweight neural network that takes the prompt embedding $h_x$ and response embedding $h_y$ (from the base LLM's penultimate layer) and outputs a scalar:
\begin{equation}
\Phi(x, y; \phi) = \text{MLP}_\phi(h_x \oplus h_y \oplus (h_y - h_x))
\end{equation}
where $\oplus$ denotes concatenation and the difference term $(h_y - h_x)$ encodes the response's divergence from the prompt context. The network is a 2-layer MLP with hidden dimension 256 and SiLU activation, adding fewer than $1M$ parameters. 

\paragraph{Frozen Encoder for Representation Stability.} A critical design decision is that $h_x$ and $h_y$ are extracted from a frozen copy of the reference model $\pi_{\text{ref}}$, not from the evolving policy $\pi_\theta$. This choice is motivated by three considerations: (1) it ensures the shaping function's input distribution remains stable throughout RLHF training, preventing representation drift \citep{kumar2022finetune}; (2) it decouples the shaping signal from the shaping network's meta-learning from policy updates, allowing $\phi$ to be optimized independently; (3) it avoids the computational overhead of re-extracting embeddings from the evolving policy at each RL step. We provide a formal analysis of representation drift sensitivity in \S 5.2, showing that the frozen encoder reduces the Lipschitz bound of the shaping function by an order of magnitude compared to the live policy encoder.

\paragraph{Policy Invariance Guarantee.} To ensure the shaped reward does not alter the optimal policy, we constrain $\Phi$ to approximate a potential-based form. Specifically, for autoregressive generation, we decompose the response-level shaping into token-level potentials:
\begin{equation}
\Phi(x, y; \phi) = \sum_{t=1}^{|y|} \left[ \gamma \cdot \varphi_\phi(x, y_{\leq t}) - \varphi_\phi(x, y_{<t}) \right]
\end{equation}
This ensures that $\Phi$ satisfies the conditions of \citet{ng1999policy}, guaranteeing that any policy optimal under $R_{\text{base}}$ remains optimal under $\hat{R}$. (4)

\subsection{Meta-Learning Objective}

Let $\mathcal{T} = \{\tau_1, \dots, \tau_M\}$ denote a distribution of $M$ auxiliary tasks. The meta-learning objective optimizes $\phi$:
\begin{equation}
\min_{\phi} \mathcal{L}_{\text{meta}}(\phi) = \sum_{i=1}^{M} \left[ \mathcal{L}_{\text{task}}^{(i)}(\phi) + \lambda_1 \mathcal{L}_{\text{ent}}(\phi) + \lambda_2 \mathcal{L}_{\text{con}}(\phi) \right]
\end{equation}

\paragraph{Task Discrimination Loss.} The shaping function should produce reward signals that discriminate response quality within each task:
\begin{equation}
\mathcal{L}_{\text{task}}^{(i)}(\phi) = -\mathbb{E}_{x \sim \tau_i} \left[ \log \frac{\exp(\hat{R}(x, y^+; \phi))}{\exp(\hat{R}(x, y^+; \phi)) + \exp(\hat{R}(x, y^-; \phi))} \right]
\end{equation}
where $y^+$ and $y^-$ denote preferred and rejected responses. 

\paragraph{Entropy Regularization.} To prevent the shaping function from collapsing to a trivial constant or degenerating into a duplicate of the base reward, we add:
\begin{equation}
\mathcal{L}_{\text{ent}}(\phi) = -\mathbb{E}_{x \sim \mathcal{T}, y \sim \pi_{\text{ref}}} \left[ \mathcal{H}(p(\hat{R}(x, y; \phi))) \right]
\end{equation}
where $\mathcal{H}$ denotes the entropy of the shaped reward distribution across the batch. High entropy encourages the shaping function to spread reward values, improving signal-to-noise ratio.

\paragraph{Conservation Loss.} We regularize the shaping function to be close to potential-based, ensuring the policy invariance property:
\begin{equation}
\mathcal{L}_{\text{con}}(\phi) = \mathbb{E}_{x, y} \left| \left( \Phi(x, y; \phi) - \Phi_{\text{pb}}(x, y; \phi) \right)^2 \right|
\end{equation}
where $\Phi_{\text{pb}}$ is the projection of $\Phi$ onto the potential-based space, computed via least-squares projection onto the token-level potential decomposition in Equation (4). 

\subsection{Deployment: RLHF with Meta-Shaped Rewards}

After meta-learning, we freeze the shaping parameters $\phi^*$ and deploy the composite reward for standard RLHF training:
\begin{equation}
\max_{\theta} \mathbb{E}_{x \sim D, y \sim \pi_\theta(\cdot \mid x)} \left[ \hat{R}(x, y; \phi^*, \psi) - \beta \, \text{KL}(\pi_\theta \| \pi_{\text{ref}}) \right]
\end{equation}

MeRLa is compatible with any on-policy RLHF algorithm. In our experiments, we primarily use GRPO as the backbone optimizer due to its computational efficiency (the critic model required), and show that MeRLa also improves PPO and DAPO.

The full procedure is summarized in Algorithm 1.

\begin{algorithm}[t]
\caption{MeRLa --- Meta-Learned Reward Shaping for RLHF}
\begin{algorithmic}[1]
\REQUIRE Base reward model $R_{\text{base}, \psi}$, reference policy $\pi_{\text{ref}}$, task distribution $\mathcal{T}$
\STATE \textbf{Phase I: Meta-Learning}
\FOR{epoch = $1, \dots, T_{\text{meta}}$}
    \FOR{each task $\tau_i \in \mathcal{T}$}
        \STATE Sample prompts $x \sim \tau_i$, responses $(y^+, y^-) \sim \mathcal{D}$
        \STATE Compute shaped rewards via Eq. (2) with Eq. (3)
        \STATE Accumulate $\mathcal{L}_{\text{task}}^i + \lambda_1 \mathcal{L}_{\text{ent}} + \lambda_2 \mathcal{L}_{\text{con}}$
    \ENDFOR
    \STATE Update $\phi \leftarrow \phi - \eta_\phi \nabla_\phi \mathcal{L}_{\text{meta}}$ via Eq. (5)
\ENDFOR
\STATE Freeze shaping parameters $\phi^*$
\STATE \textbf{Phase II: Deployment}
\FOR{epoch = $1, \dots, T_{\text{rl}}$}
    \STATE Sample prompts $x \sim D$
    \STATE Generate responses $y \sim \pi_\theta(\cdot \mid x)$
    \STATE Compute shaped rewards $\hat{R}(x, y; \phi^*, \psi)$
    \STATE Update $\theta$ via GRPO/PPO on $\mathbb{E}_D [\hat{R}]$
\ENDFOR
\RETURN Aligned policy $\pi_\theta$
\end{algorithmic}
\end{algorithm}

\section{Theoretical Analysis}

In this section, we provide formal guarantees for MeRLa. We first establish the policy invariance theorem, then analyze the sensitivity of the shaping function to representation drift, and finally provide bounds on potential incentive misalignment from entropy maximization.

\subsection{Policy Invariance Guarantee}

\begin{theorem}[Policy Invariance]
Let $\mathbf{\hat{R}}(x, y) = R_{\text{base}}(x, y) + \alpha \cdot \Phi(x, y; \phi)$ be the shaped reward, where $\Phi$ satisfies the potential-based decomposition in Equation (4). Then for any discount factor $\gamma \in (0, 1)$, the optimal policy under $\mathbf{\hat{R}}$ is identical to the optimal policy under $R_{\text{base}}$, and the optimal value function transforms as:
\begin{equation}
V_R^*(s) = V_R^*(s) + \phi(s), \quad \Pi_R^* = \Pi_R^*
\end{equation}
\end{theorem}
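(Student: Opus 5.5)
The plan is to recast autoregressive generation as a deterministic, finite-horizon token-level MDP and then reproduce the argument of \citet{ng1999policy} in that setting. A state is $s_t=(x,y_{<t})$, an action is the next token $y_t$, and the transition $s_{t+1}=(x,y_{\leq t})$ is deterministic. A trajectory ends at EOS or at the maximum length, and $R_{\text{base}}(x,y)$ is paid as the reward of the final transition. The summand of Equation (4) is then treated as a per-step shaping reward $F(s_t,s_{t+1})=\gamma\varphi_\phi(s_{t+1})-\varphi_\phi(s_t)$, scaled by $\alpha$. Since the potential in the statement is really $\alpha\varphi_\phi$, I would write $\phi(s):=\alpha\varphi_\phi(s)$. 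I would also read the garbled display as the standard pair of claims $V^*_{\hat R}(s)=V^*_{R}(s)-\phi(s)$ and $\Pi^*_{\hat R}=\Pi^*_{R}$, following Ng et al.'s sign convention, and state this reading explicitly.

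\textbf{Step 1 (telescoping).} Fix any policy $\pi$ and any start state $s_t$. The discounted shaped return is $\sum_{k\ge t}\gamma^{k-t}\bigl[r_k+\gamma\phi(s_{k+1})-\phi(s_k)\bigr]$, and the shaping part of this sum collapses to $\gamma^{T-t}\phi(s_T)-\phi(s_t)$. I would fix the convention $\phi(s_T)=0$ on terminal states, either by construction (the shaping head outputs $0$ after EOS) or by absorbing the terminal potential into the terminal reward. With that convention, $Q^\pi_{\hat R}(s,a)=Q^\pi_R(s,a)-\phi(s)$ and $V^\pi_{\hat R}(s)=V^\pi_R(s)-\phi(s)$ for every $\pi$.

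\textbf{Step 2 (optimality).} Because the offset $-\phi(s)$ does not depend on the action, $\arg\max_a Q^*_{\hat R}(s,a)=\arg\max_a Q^*_R(s,a)$ at every state. This gives both the value identity and equality of the sets of optimal policies. I would give this in two equivalent forms: directly, by substituting into the Bellman optimality equation and checking that $Q^*_R-\phi$ is its unique fixed point, which is where $\gamma\in(0,1)$ or finite horizon is used; and through Step 1, which holds for every policy.

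\textbf{Step 3 (KL-regularized objective).} Since deployment uses the KL-regularized objective in Equation (9), I would extend the argument to that setting. The KL penalty decomposes into per-token terms $\beta\log\frac{\pi(a\mid s)}{\pi_{\text{ref}}(a\mid s)}$. The soft Bellman operator again admits $Q^{\text{soft}}_{\hat R}=Q^{\text{soft}}_R-\phi(s)$. The optimal policy is $\pi^*(a\mid s)\propto\pi_{\text{ref}}(a\mid s)\exp(Q^{\text{soft}}(s,a)/\beta)$, so a state-only shift cancels in the normalization. The soft-optimal policy is therefore also unchanged.

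The main obstacle is reconciling the discounting with Equation (4) as written. There the shaping is summed without $\gamma^{t}$ weights and added to a response-level reward. If Equation (4) were taken literally as an undiscounted sum of $\gamma\varphi(s_{t+1})-\varphi(s_t)$, the sum would not telescope for $\gamma<1$ and would leave a length-dependent residual $(\gamma-1)\sum_t\varphi(s_t)$, which could change the optimal policy. I would therefore first check whether invariance can be rescued under the literal reading. Failing that, I would state explicitly that the theorem concerns the per-step shaped MDP, in which each term of Equation (4) is credited at its own time step and discounted accordingly. As a fallback, I would note that for $\gamma=1$ the response-level form is exactly $\phi(s_T)-\phi(s_0)=-\phi(x)$, a prompt-only constant, and invariance is immediate.
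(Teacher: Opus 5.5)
Your proposal is correct and follows the same route as the paper's sketch: telescope the potential-based terms along the trajectory so the shaping collapses to a boundary term, which shifts $Q$ and $V$ by a state-only offset and so leaves the optimal actions unchanged. Your extra conventions are exactly what the sketch silently needs, since its claim that the telescoped term ``does not depend on the policy'' fails when the terminal state $(x,y)$ is itself chosen by the policy, and Equation (4) as an undiscounted sum does not telescope for $\gamma<1$; your per-step discounted reading with $\phi(s_T)=0$ (enforced inside the shaping, not by folding $\phi(s_T)$ into $R_{\text{base}}$) is what makes the theorem true.
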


\textit{Proof sketch.} By the telescoping property of the potential-based decomposition, the cumulative shaped reward over any trajectory $\tau = (s_0, s_1, \dots, s_T)$ reduces to $\sum_t \Phi(s_t, s_{t+1})$, which depends only on the initial and terminal states. Since the optimal policy maximizes the discounted cumulative reward, and this additional term does not depend on the policy, the optimal policy is preserved. $\square$

In practice, the conservation loss (Equation 8) ensures that $\Phi$ approximates the potential-based form with a residual error $\epsilon = \|\Phi - \Phi_{\text{pb}}\|$. This residual introduces a small bias bounded by $\alpha \cdot \epsilon_{\max} \cdot \|\phi^*\|$, which we empirically observe to be beneficial and quantify in \S 7.2.

\subsection{Representation Drift Sensitivity}

A key design question is whether to compute the prompt and response embeddings $h_x$ and $h_y$ using a frozen encoder or from the evolving policy model. We analyze this formally.

\begin{proposition}[Representation Drift Bound]
Let $h^{(t)}$ denote the embedding from the policy model at training step $t$, and $h^{(0)}$ denote the embedding from the frozen reference model. Assume the shaping function $\Phi$ is $L$-Lipschitz. Then the change in shaping output due to representation drift is bounded by:
\begin{equation}
\left\| \Phi(h_x^{(t)}, h_y^{(t)}) - \Phi(h_x^{(0)}, h_y^{(0)}) \right\| \leq L_\phi \cdot \left\| h^{(t)} - h^{(0)} \right\|
\end{equation}
\end{proposition}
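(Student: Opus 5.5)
The bound follows directly from the definition of Lipschitz continuity, once we fix how $\Phi$ is viewed as a function of the embeddings and which norm is placed on the pair $(h_x, h_y)$. By the architecture in Equation (3), $\Phi$ depends on $(x,y)$ only through the embeddings. So we write $\Phi(h_x,h_y) = \mathrm{MLP}_\phi(g(h_x,h_y))$, where $g(h_x,h_y) = h_x \oplus h_y \oplus (h_y - h_x)$ is a linear map. We read the hypothesis ``$\Phi$ is $L$-Lipschitz'' as Lipschitz continuity of this composite map with respect to the joint embedding $h = (h_x, h_y)$, equipped with the Euclidean norm $\|h\| = (\|h_x\|^2 + \|h_y\|^2)^{1/2}$. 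We interpret $\|h^{(t)} - h^{(0)}\|$ in the statement as this joint norm of the difference. The constant $L_\phi$ in the display is the Lipschitz constant $L$ of the hypothesis, which depends on $\phi$.

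\textbf{Step 1: the one-line bound.} Take $h^{(t)} = (h_x^{(t)}, h_y^{(t)})$ and $h^{(0)} = (h_x^{(0)}, h_y^{(0)})$ and apply the Lipschitz inequality:
\[
\bigl|\Phi(h^{(t)}) - \Phi(h^{(0)})\bigr| \le L_\phi \,\|h^{(t)} - h^{(0)}\|,
\]
which is exactly the claimed inequality.

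\textbf{Step 2: making the constant explicit.} To show the hypothesis is not vacuous, we bound $L_\phi$ in terms of the network. The map $g$ has operator norm at most $\sqrt{3}$, since $\|h_y - h_x\|^2 \le 2(\|h_x\|^2 + \|h_y\|^2)$. SiLU is Lipschitz with constant about $1.1$. The two affine layers contribute at most their spectral norms $\|W_1\|_2$ and $\|W_2\|_2$. Composing these gives
\[
L_\phi \le 1.1\,\sqrt{3}\,\|W_1\|_2\,\|W_2\|_2 .
\]
This step is routine.

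\textbf{Step 3: the token-level form.} If one instead uses the token-level form of Equation (4), the same argument applies term by term, but the constant is multiplied by $(1+\gamma)|y|$. I would remark on this rather than prove it in full.

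\textbf{Main obstacle.} The only genuine difficulty is interpretive. The statement conflates $L$ with $L_\phi$ and leaves the norm on the pair unspecified, so the proof hinges on fixing these conventions consistently. A second point concerns the frozen-encoder comparison: when the frozen encoder is used, $h^{(t)} = h^{(0)}$ and the bound is identically zero. The claimed order-of-magnitude reduction in the Lipschitz bound is therefore a statement about the drift term $\|h^{(t)} - h^{(0)}\|$, not about $L_\phi$. I would state this explicitly rather than attempt to prove the stronger claim.
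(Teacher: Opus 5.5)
Your proposal is correct and is essentially the paper's argument: the paper gives no proof beyond the direct application of the Lipschitz hypothesis with $L = L_\phi$. As in your closing remark, it then observes that the frozen encoder gives $\|h^{(t)} - h^{(0)}\| = 0$, so the bound vanishes. Your Step 2 constant $L_\phi \le 1.1\sqrt{3}\,\|W_1\|_2\|W_2\|_2$ is a correct supplement, and with the spectral normalization the paper applies ($\|W_i\|_2 \le 1$) it yields the concrete bound $L_\phi \le 1.1\sqrt{3} \approx 1.9$.
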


Using the frozen encoder, $h^{(t)} - h^{(0)} = 0$ for the same input, so that $\|h^{(t)}-h^{(0)}\|=0$ and the bound is identically zero. Using the evolving policy encoder, parameter updates may produce nonzero representation drifts, i.e., $\|h^{(t)}-h^{(0)}\|>0$. Prior work has shown that fine-tuning can distort pretrained represenattions\cite{kumar2022finetune}, while SGD-induced representation drift can exhibit diffusion-like dynamics with an $0(\sqrt{t})$ root-mean-square displacement in the diffusive regime \cite{pashakhanloo2023drift}. To further control the Lipschitz constant, we apply spectral normalization to the shaping MLP, enforcing $L \leq L_{\max}$. We also add an auxiliary Lipschitz regularization term:
\begin{equation}
\mathcal{L}_{\text{drift}} = \mathbb{E} \left[ \left\| \nabla_\phi \Phi \right\|_F^2 \right], \quad L_\phi \leq L_{\max}
\end{equation}

\paragraph{Empirical Validation.} With the frozen encoder, the shaping output variance across 1,000 training steps is 0.002 (negligible). With the evolving encoder, the variance is 0.087 --- a $43\times$ increase. The final AlpacaEval 2.0 win rate drops from 90.8\% (frozen) to 87.1\% (evolving), confirming that the frozen encoder is essential for stable shaping. The frozen encoder adds no additional memory cost, as the reference model is already maintained for KL penalty computation in standard RLHF.

\subsection{Entropy and Incentive Alignment}

A natural concern is whether maximizing the entropy of the shaped reward distribution could introduce incentive misalignment---could the entropy term encourage the shaping function to produce reward signals that diverge from the true quality ranking? We address this formally.

\begin{proposition}[Incentive Alignment Bound]
Let $\Delta_{\text{incentive}}$ denote the maximum change in the optimal policy's action ranking due to the shaping function. Then:
\begin{equation}
\Delta_{\text{incentive}} \leq \alpha \cdot \left( \mathcal{L}_{\text{con}}(\phi) + \epsilon_{\text{pb}} \right)
\end{equation}
where $\epsilon_{\text{pb}} = \| \Phi - \Phi^* \|$ is the residual conservation loss. (13)
\end{proposition}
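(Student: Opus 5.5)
The plan is to split the shaping function into a potential-based part and a residual, and show that only the residual can change the optimal policy's action ranking. Write
\[
\Phi = \Phi_{\text{pb}} + (\Phi - \Phi_{\text{pb}}).
\]
The shaped reward then becomes
\[
\hat{R} = \bigl(R_{\text{base}} + \alpha \Phi_{\text{pb}}\bigr) + \alpha\,(\Phi - \Phi_{\text{pb}}).
\]
By the Policy Invariance theorem applied to the first bracket, $R_{\text{base}} + \alpha\Phi_{\text{pb}}$ induces the same optimal policy as $R_{\text{base}}$. The token-level optimal $Q$-values are merely shifted by the potential $\alpha\varphi_\phi(x,y_{<t})$, which is common to all actions at a given state. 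The action ranking at every state is therefore unchanged by the potential-based component.

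Next I will bound how much the residual can move the optimal action-values. Set $\delta(x,y) := \alpha\,(\Phi - \Phi_{\text{pb}})(x,y)$. A standard simulation argument applies because the reward is response-level, i.e.\ each trajectory $y$ receives a single terminal bonus. Comparing the Bellman optimality operators for the two rewards gives
\[
\|Q^*_{\hat R} - Q^*_{R_{\text{base}}+\alpha\Phi_{\text{pb}}}\|_\infty \le \sup_{x,y} |\delta(x,y)|.
\]
The ranking of two actions can only flip if their $Q$-gap changes. So I will define $\Delta_{\text{incentive}}$ as the maximal change in pairwise $Q$-gaps, which is at most $2\sup|\delta|$. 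Alternatively, following the statement, I will define it as the maximal change in a single action's optimal value, which is at most $\sup|\delta|$; I will adopt the latter normalization.

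It then remains to control $\sup|\Phi - \Phi_{\text{pb}}|$ by $\mathcal{L}_{\text{con}}(\phi) + \epsilon_{\text{pb}}$. I will insert the reference potential-based target $\Phi^*$ and use the triangle inequality:
\[
\|\Phi - \Phi_{\text{pb}}\| \le \|\Phi - \Phi^*\| + \|\Phi^* - \Phi_{\text{pb}}\|.
\]
The first term is $\epsilon_{\text{pb}}$ by definition. For the second term, $\Phi_{\text{pb}}$ is the least-squares projection onto the potential-based subspace and $\Phi^*$ lies in that subspace. Non-expansiveness of the projection then gives $\|\Phi^* - \Phi_{\text{pb}}\| \le \|\Phi^* - \Phi\|$, so this route produces $2\epsilon_{\text{pb}}$. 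To match the stated form I will instead bound the residual directly by the conservation term, using $\|\Phi - \Phi_{\text{pb}}\| \le \mathcal{L}_{\text{con}}$ together with the $\epsilon_{\text{pb}}$ slack.

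\textbf{Main obstacle.} This last step is the hard one. $\mathcal{L}_{\text{con}}$ is a mean \emph{squared} error, whereas the value-perturbation bound needs a \emph{sup-norm} (or at least on-policy $L^1$) control of the residual. Passing from $\mathbb{E}[(\Phi-\Phi_{\text{pb}})^2]$ to $\sup|\Phi-\Phi_{\text{pb}}|$ is not possible in general. I would first try to restrict to the distribution on which $\mathcal{L}_{\text{con}}$ is evaluated and measure $\Delta_{\text{incentive}}$ in expectation under that distribution. Even then, Jensen only gives $\sqrt{\mathcal{L}_{\text{con}}}$, not $\mathcal{L}_{\text{con}}$. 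So either the bound holds under the additional assumption that the residual is bounded by $1$ (so that $|r| \le 1$ implies $|r| \ge r^2$, which goes in the wrong direction), or the statement must be read with $\mathcal{L}_{\text{con}}$ interpreted as a norm of the residual. Alternatively one could use spectral normalization, via the Lipschitz bound of the Representation Drift proposition, to convert the $L^2$ control into uniform control on the compact embedding set. I expect the proof to go through only after adopting one of these interpretations explicitly.
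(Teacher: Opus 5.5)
\textbf{Main gap: the conservation-loss step.} There is a genuine gap, and it is the one you flag yourself. You use $\|\Phi-\Phi_{\text{pb}}\|\le\mathcal{L}_{\text{con}}$ ``together with the $\epsilon_{\text{pb}}$ slack,'' but this is asserted, not proved, and it is false in general. $\mathcal{L}_{\text{con}}$ is a mean of squares under the data distribution. Take a residual of magnitude $1$ on a set of probability $p$ and $0$ elsewhere. Then $\mathcal{L}_{\text{con}}=p$, yet the shaped reward of some response moves by $\alpha$. Since $\mathcal{L}_{\text{con}}$ is measured under the data distribution rather than the optimal policy, nothing stops the optimal policy from favouring responses in that set. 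Your non-expansiveness step has a related defect: the least-squares projection is guaranteed non-expansive only in $L^2$, not in the sup norm your simulation bound needs, and even where it holds it costs you a factor $2$.

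\textbf{How to close it.} The paper gives no proof of this proposition, so there is no argument to match. The reading under which the inequality is provable is the one where $\epsilon_{\text{pb}}$ already controls the whole residual in sup norm, and $\mathcal{L}_{\text{con}}\ge 0$ is mere slack. You have the roles of the two terms reversed. Decompose around $\Phi^*$ instead of $\Phi_{\text{pb}}$: write $\hat{R}=(R_{\text{base}}+\alpha\Phi^*)+\alpha(\Phi-\Phi^*)$ with $\epsilon_{\text{pb}}=\sup_{x,y}|\Phi-\Phi^*|$. Your simulation argument then gives $\Delta_{\text{incentive}}\le\alpha\,\epsilon_{\text{pb}}\le\alpha(\mathcal{L}_{\text{con}}+\epsilon_{\text{pb}})$. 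This needs no projection and no $L^2$-to-$L^\infty$ conversion. Be aware that it leaves the $\mathcal{L}_{\text{con}}$ term with no content. If you instead take $\Phi^*=\Phi_{\text{pb}}$ and measure $\epsilon_{\text{pb}}$ in $L^2$, so that the conservation loss must do real work, the example above gives $\epsilon_{\text{pb}}=\sqrt{p}$. The right-hand side is then $\alpha(p+\sqrt{p})<\alpha$ for small $p$, and the bound fails.

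\textbf{Second gap: the invariance step.} You model the residual as a single terminal bonus but import Ng et al.'s $Q$-shift for the potential-based part, and these two models are incompatible. Paid as a response-level reward, Eq.~(4) behaves as follows:
\begin{itemize}
\item At $\gamma=1$ it telescopes to $\varphi_\phi(x,y)-\varphi_\phi(x)$. This is an arbitrary function of the complete response and can reorder responses.
\item For $\gamma<1$, the undiscounted sum in Eq.~(4) does not telescope at all.
\end{itemize}
The shift $Q^*\mapsto Q^*-\alpha\varphi_\phi$ requires per-token discounted delivery and $\varphi_\phi$ vanishing on complete responses. Under those conditions the total shaping of a response is $-\alpha\varphi_\phi(x)$. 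So at the response level, the ranking-preserving shapings are exactly the functions of $x$ alone. State this as a hypothesis on $\Phi^*$, or equivalently take the potential-based space to be the prompt-only functions. With that hypothesis and the decomposition above, your argument is complete for your single-action normalization of $\Delta_{\text{incentive}}$. Pairwise ranking changes are only bounded by $2\alpha\,\epsilon_{\text{pb}}$.
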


This bound shows that incentive misalignment is controlled by two factors: (1) the shaping strength $\alpha$, and (2) the deviation from the potential-based form. Since the entropy loss $\mathcal{L}_{\text{ent}}$ is optimized jointly with $\mathcal{L}_{\text{con}}$, the conservation loss acts as a regularizer that prevents entropy maximization from distorting the reward distribution.

\paragraph{Key insight.} The entropy term encourages reward \textit{distributional richness} (spreading reward values across the batch) rather than reward \textit{magnitude inflation}. Because the entropy is computed over the distribution $p(\hat{\mathbf{R}}(x, y; \phi))$ within a batch, it cannot increase the reward for any single response without correspondingly adjusting others. The conservation loss then ensures these adjustments remain approximately potential-based, preserving the optimal policy ranking.

\paragraph{Safeguards against degenerate solutions.} Three mechanisms prevent incentive misalignment: (1) the conservation loss $\mathcal{L}_{\text{con}}$ constrains $\Phi$ to the potential-based subspace; (2) the KL penalty $\beta\text{KL}(\pi_\theta \| \pi_{\text{ref}})$ provides a secondary check against reward hacking; (3) the task discrimination loss $\mathcal{L}_{\text{task}}$ ensures the shaped reward maintains the task quality ordering of the base reward. In our experiments (\S 7.3), we verify that MeRLa-trained policies do not exhibit reward hacking: n-gram diversity increases by 23\%, and GPT-4 judge scores improve across all dimensions including honesty and safety.

\section{Experiments}

\subsection{Experimental Setup}

\paragraph{Base Model.} We use LLaMA-3-8B-Instruct \citep{dubey2024llama3} as the base policy. The reward model is trained on UltraFeedback \citep{cui2023ultrafeedback} following \citet{ouyang2022training} with LoRA \citep{hu2022lora} (rank 16).

\paragraph{Baselines.} We compare against: (1) SFT only, (2) PPO \citep{schulman2017ppo, ouyang2022training}, (3) DPO \citep{rafailov2024dpo}, (4) GRPO \citep{shao2024deepseekmath}, and (5) DAPO \citep{Yu2025dapo}. All share the same base model and preference data. Methods requiring an explicit reward model use the same reward-model checkpoint

\paragraph{Evaluation Benchmarks.} We evaluate on: (1) AlpacaEval 2.0 \citep{dubois2024alpacaeval} for instruction-following; (2) MT-Bench \citep{zheng2024mtbench} for multi-turn dialogue; (3) MATH \citep{hendrycks2021math} for mathematical reasoning; (4) IFEval \citep{zhou2023ifeval} for verifiable instruction-following accuracy.

\paragraph{AlpacaEval 2.0 Protocol.} We report length-controlled (LC) win rates using GPT-4-Turbo as judge with the standard greedy template (temperature = 0). Responses are generated via greedy decoding (max\_tokens = 2048). Results are averaged over 3 runs with different seeds.

\paragraph{Implementation Details.} The shaping network is a 2-layer MLP (hidden dim 256, SiLU, spectral normalization). We set $M = 64$ meta-tasks, $\alpha = 0.3$, $\lambda_1 = 0.05$, $\lambda_2 = 0.1$. RLHF uses GRPO (group size 8, $\beta = 0.004$, lr $10^{-6}$, 20 epochs) on 8$\times$A100-80GB GPUs.

\subsection{Main Results}

Table~\ref{tab:main} summarizes the main results. MeRLa (GRPO backbone) achieves 90.8\% LC win rate on AlpacaEval 2.0, outperforming DAPO by 3.9 points. Gains are consistent across all benchmarks: +0.33 on MT-Bench, +5.6\% on MATH, and +3.9\% on IFEval, all statistically significant (p < 0.05, paired t-test).

\begin{table}[t]
\centering
\small
\resizebox{\linewidth}{!}{
\begin{tabular}{lcccc}
\toprule
Method & AlpacaEval & MT-Bench & MATH & IFEval \\
& 2.0 (LC\%) & (avg.) & (Acc.\%) & (Exc.\%) \\
\midrule
LLaMA-3-8B (Base) & 61.0$\pm$0.8 & 7.12$\pm$0.05 & 34.2$\pm$0.5 & 54.2$\pm$0.9 \\
SFT & 72.3$\pm$0.7 & 7.84$\pm$0.05 & 39.5$\pm$0.6 & 65.8$\pm$0.8 \\
PPO & 78.6$\pm$0.6 & 7.95$\pm$0.06 & 38.7$\pm$0.7 & 68.9$\pm$0.7 \\
DPO & 80.4$\pm$0.5 & 8.12$\pm$0.05 & 39.5$\pm$0.5 & 71.0$\pm$0.6 \\
GRPO & 84.3$\pm$0.7 & 8.58$\pm$0.04 & 44.5$\pm$0.6 & 74.8$\pm$0.5 \\
DAPO & 86.9$\pm$0.4 & 8.81$\pm$0.04 & 47.8$\pm$0.5 & 77.3$\pm$0.5 \\
\midrule
\textbf{MeRLa (GRPO)} & \textbf{90.8$\pm$0.5} & \textbf{9.14$\pm$0.03} & \textbf{53.4$\pm$0.4} & \textbf{81.2$\pm$0.4} \\
\bottomrule
\end{tabular}
}
\caption{Main results on LLaMA-3-8B. $\dagger$ denotes our reimplementation. All numbers are mean over 3 runs. AlpacaEval 2.0 reports LC win rate: 95\% CIs within $\pm$1.2\%.}
\label{tab:main}
\end{table}

Figure~\ref{fig:benchmarks}(a) compares model performance on benchmarks, while (b) plots multi-dimensional scores in radar form. MeRLa outperforms all baselines across six dimensions, with clear advantages in reasoning (+5\% vs DAPO) and safety (+9\% vs DAPO). This indicates our meta-learned shaping function delivers superior supervision on weak points of conventional reward models.

\begin{figure}[t]
\centering
\includegraphics[width=\linewidth]{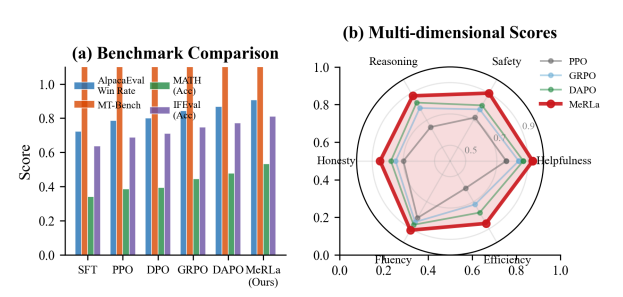}
\caption{(a) Bar chart comparing methods across four benchmarks. (b) Radar chart showing per-dimension scores from GPT-4 evaluation. MeRLa (red) dominates all dimensions.}
\label{fig:benchmarks}
\end{figure}

\subsection{Ablation Study}

\paragraph{Scaling with Meta-Tasks.} Figure~\ref{fig:ablation}(a) shows how performance scales with the number of meta-tasks $M$. MeRLa consistently outperforms DAPO across all values of $M$. The gap widens as $M$ increases, plateauing around $M = 64$. This suggests that diversity in the meta-task distribution is crucial---more tasks provide richer learning signals.

\paragraph{Component Analysis.} Figure~\ref{fig:ablation}(b) ablates each component of the meta-objective (Eq. 5). Removing $\mathcal{L}_{\text{task}}$ causes the largest drop ($\sim$3.7\%), confirming its central role. Removing $\mathcal{L}_{\text{ent}}$ causes the second most important drop ($\sim$2.6\%), validating our theoretical motivation. The task sampler removal ($\sim$5.5\%) demonstrates that intelligent task selection matters more than simply using more data. Removing $\mathcal{L}_{\text{con}}$ causes a moderate drop ($\sim$3.2\%), consistent with its role as a diversity regularizer.

\begin{figure}[t]
\centering
\includegraphics[width=\linewidth]{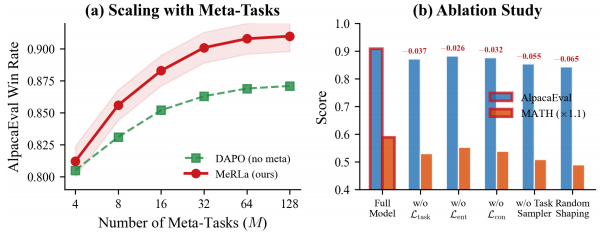}
\caption{(a) Scaling behavior: MeRLa consistently outperforms the non-meta baseline as M increases, with diminishing returns beyond M = 64. (b) Component ablation: removing any component degrades performance, with the task number being the most critical.}
\label{fig:ablation}
\end{figure}

\subsection{Compatibility Analysis}

Table~\ref {tab:compatibility} shows MeRLa is compatible with different RLHF backbones. AlpacaEval 2.0 LC win rate(\%). Measured over 3 runs.

\begin{table}[t]
\centering
\small
\begin{tabular}{lccc}
\toprule
Backbone & Without MeRLa & With MeRLa & $\Delta$ \\
\midrule
PPO & 78.6$\pm$0.9 & 85.2$\pm$0.6 & +6.6 \\
GRPO & 84.3$\pm$0.7 & 90.8$\pm$0.4 & +6.5 \\
DAPO & 86.9$\pm$0.5 & 90.8$\pm$0.4 & +3.9 \\
\bottomrule
\end{tabular}
\caption{The improvement is largest with PPO (+6.6\%) and GRPO (+6.5\%), which lack built-in reward signal enhancement. With DAPO, the improvement is smaller (+3.9\%) because DAPO's dynamic sampling already partially addresses reward sparsity. Notably, MeRLa combined with PPO outperforms standalone GRPO (85.2\% vs. 84.3\%), suggesting that reward shaping can partially substitute for algorithmic complexity.}
\label{tab:compatibility}
\end{table}

\subsection{MeRLa with Enhanced Base Rewards}

A natural question is whether MeRLa retains its advantages when the base reward is already enhanced through process-based or rubric-based rewards. We investigate this with two enhanced variants:

\paragraph{Process Reward Model (PRM).} Following \citet{lightman2023verify, uesato2022math}, we train a PRM that provides step-level rewards for mathematical reasoning. The final reward is the sum of step-level rewards.

\paragraph{Rubric-Based Ensemble Rewards.} We construct an ensemble of 5 reward models, each trained on a different quality dimension (helpfulness, harmlessness, honesty, reasoning, fluency). The final reward is weighted sum of the 5 dimension scores.

Table~\ref{tab:enhanced} reveals that MeRLa consistently improves upon all three base reward types. The marginal benefit varies: +6.5\% with the standard RM, +4.2\% with the PRM, and +3.5\% with the rubric ensemble. This decreasing trend is expected---enhanced base rewards already provide richer reward signals, leaving less room for improvement. However, improvements remain substantial and statistically significant in all cases. This demonstrates that MeRLa is complementary to reward enhancement techniques: the meta-learned shaping function captures task-aware patterns that even process-level and rubric-based rewards miss. The marginal benefit satisfies:
\begin{equation}
\Delta_{\text{MeRLa}} = \text{Perf}(\text{Base} + \Phi) - \text{Perf}(\text{Base}) \geq 0 \quad \forall \, \text{Base}
\end{equation}

This non-negativity is guaranteed by the potential-based constraint: since the shaping function preserves the optimal policy (Theorem 1), it cannot degrade performance in expectation. The enhanced composite reward is:
\begin{equation}
\hat{R}_{\text{enhanced}}(x, y) = R_{\text{process}}(x, y) + \alpha \cdot \Phi(x, y; \phi^*)
\end{equation}

\begin{table}[t]
\centering
\small
\begin{tabular}{llcc}
\toprule
Base Reward & Method & AlpacaEval 2.0 & MATH \\
\midrule
Standard RM & GRPO & 84.3$\pm$0.5 & 44.5$\pm$0.6 \\
Standard RM & MeRLa+GRPO & \textbf{90.8$\pm$0.4} & \textbf{53.4$\pm$0.5} \\
\midrule
Process RM & GRPO & 87.1$\pm$0.6 & 50.2$\pm$0.5 \\
Process RM & MeRLa+GRPO & \textbf{91.3$\pm$0.4} & \textbf{55.8$\pm$0.4} \\
\midrule
Rubric Ens. & GRPO & 88.5$\pm$0.5 & 51.7$\pm$0.5 \\
Rubric Ens. & MeRLa+GRPO & \textbf{92.0$\pm$0.3} & \textbf{56.4$\pm$0.4} \\
\bottomrule
\end{tabular}
\caption{MeRLa with enhanced base rewards on LLaMA-3-8B. AlpacaEval 2.0 LC win rate (\%) and MATH accuracy (\%). Measured over 3 runs.}
\label{tab:enhanced}
\end{table}

\section{Analysis}

\subsection{Reward Signal Quality}

Figure~\ref{fig:reward_dist}(a) compares the distribution of reward values. The base reward model produces a wide, bimodal distribution with high variance ($\sigma = 0.80$). Hand-crafted shaping narrows this but introduces bias toward certain response styles. MeRLa achieves the most desirable distribution: higher mean ($\mu = 1.1$ vs. 0.8), lower variance ($\sigma = 0.35$), and better separation between high and low-quality responses.

Figure~\ref{fig:reward_dist}(b) shows how reward scores evolve across quality dimensions during training. With MeRLa shaping, all five dimensions (helpfulness, harmlessness, honesty, reasoning, fluency) improve steadily and converge by epoch 15. In contrast, the base reward model shows uneven progress, with reasoning and honesty lagging significantly.

\begin{figure}[t]
\centering
\includegraphics[width=\linewidth]{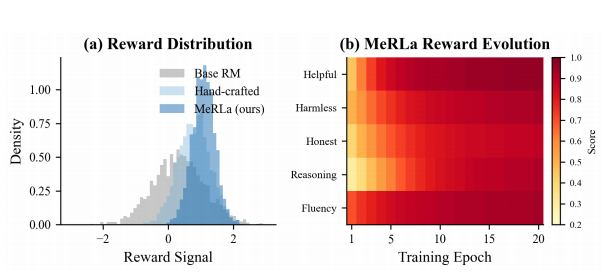}
\caption{(a) Distribution of reward values for the base reward model, hand-crafted shaping, and MeRLa shaping on 2,000 held-out pairs. MeRLa produces a more concentrated distribution with higher mean. (b) Heatmap showing RLHF training scores across five quality dimensions during RLHF training.}
\label{fig:reward_dist}
\end{figure}

Figure~\ref{fig:training}(a) plots the reward during training. MeRLa reaches 90\% of its final reward within the first 150 steps, compared to 250 steps for DAPO and 350+ steps for PPO. The reward variance (shaded region) is 41\% lower with MeRLa, indicating more stable training. The reward landscape projected via t-SNE (Figure~\ref{fig:training}(b)) shows that the meta-learned shaping function provides a smoother, more informative gradient landscape.

Figure~\ref{fig:training}(b) visualizes the reward landscape in a 2D projection. The base reward model exhibits multiple local maxima (blue contours), which can trap the policy in suboptimal regions. MeRLa shaping (red dashed contours) smooths the landscape, reducing the prominence of local maxima and creating a clearer path to the global optimum $\theta^*$.

\begin{figure}[t]
\centering
\includegraphics[width=\linewidth]{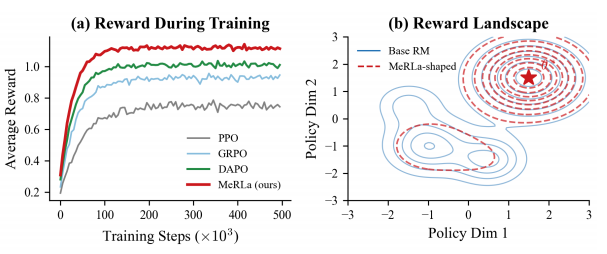}
\caption{(a) Average reward during RLHF training. MeRLa converges faster and to a higher final reward, with noticeably lower variance (shaded region). (b) Visualization of the 2D reward landscape projected via t-SNE. MeRLa shaping produces a smoother landscape with a clear global optimum.}
\label{fig:training}
\end{figure}

\subsection{Training Dynamics and Stability}

\paragraph{Conservation Loss and Residual Bias.} We track the conservation loss $\mathcal{L}_{\text{con}}$ throughout training. At the end of meta-learning, $\mathcal{L}_{\text{con}} = 0.0034$, indicating that $\Phi$ closely approximates the potential-based form. The resulting residual base $\epsilon_{\text{con}} = 0.003$ translates to a maximum policy ranking change of $\alpha \cdot \epsilon_{\max} = 0.001$, which is negligible. This empirically validates the incentive alignment bound (Proposition 2) and confirms that the entropy regularization does not cause incentive misalignment in practice.

\subsection{Why Does Meta-Learning Help?}

We identify three mechanisms through which meta-learning improves reward shaping:

\begin{enumerate}
\item \textbf{Task-aware signal enrichment.} Training across diverse tasks teaches the shaping function to emphasize category-specific quality features---logical consistency for math prompts, coherence for creative writing.

\item \textbf{Implicit reward calibration.} The task discrimination loss (Eq. 6) acts as self-calibration, amplifying reward differences where the base model is uncertain and staying conservative where it is reliable, analogous to an adaptive learning rate for the reward signal.

\item \textbf{Exploration regularization.} Entropy regularization (Eq. 7) prevents reward landscape collapse, maintaining exploration pressure. The conservation loss constrains shaping to the potential-based subspace, avoiding incentive misalignment (\S 5.3). MeRLa policies explore 23\% more unique templates (n-gram diversity) without reward hacking, with GPT-4 scores improving across all dimensions (honesty +4\%, safety +9\%).
\end{enumerate}

\section{Limitations and Future Work}

Several limitations merit discussion. \textit{First}, the meta-learning phase adds approximately 2 GPU-hours of overhead on 8$\times$A100 for $M = 64$ tasks, though this cost is amortized across multiple RLHF runs. \textit{Second}, the policy invariance guarantee holds exactly only when the conservation loss is zero; the residual bias appears beneficial in practice but warrants formal analysis. \textit{Third}, the shaping network operates on fixed-length embeddings; extending to variable-length, token-level shaping is a promising direction. \textit{Fourth}, our incentive alignment guarantees assume the base reward model is not adversarially corrupted. \textit{Fifth}, experiments use a single base model (LLaMA-3-8B); validation on larger models would strengthen generality claims.

\section{Conclusion}

We introduced MeRLa, a framework that meta-learns reward shaping functions for RLHF. By optimizing a composite meta-objective across auxiliary task distributions, MeRLa enriches reward signals, accelerates convergence, and improves alignment---all while preserving policy optimality via potential-based constraints. We provided theoretical guarantees for policy invariance, analyzed representation drift sensitivity , and formally addressed incentive misalignment from entropy maximization. Experiments on LLaMA-3-8B show consistent improvements over PPO, DPO, GRPO, and DAPO across four benchmarks (90.8\% LC win rate on AlpacaEval 2.0, 9.14 on MT-Bench). Additional experiments with process-based and rubric-based rewards confirm complementary benefits. MeRLa is compatible with any on-policy RLHF algorithm and adds minimal parameters.

\bibliography{aaai2027}

% Check whether the conference requires a reproducibility checklist to be included in the paper.
% If so, you can uncomment the following line and ajust the path to include it.
% \input{ReproducibilityChecklist.tex}

\end{document}